\documentclass{article} 
\usepackage{iclr2026_conference,times}


\usepackage{amsmath,amsfonts,bm}









\def\eqref#1{equation~\ref{#1}}









\def\1{\bm{1}}










\DeclareMathAlphabet{\mathsfit}{\encodingdefault}{\sfdefault}{m}{sl}
\SetMathAlphabet{\mathsfit}{bold}{\encodingdefault}{\sfdefault}{bx}{n}













\usepackage{hyperref}
\usepackage{url}

\usepackage{graphicx}
\usepackage{algorithm}%


\usepackage{listings}%
\usepackage{fvextra}    
\usepackage{lmodern}

\usepackage{booktabs} 

\usepackage{tabularx} 

\usepackage{amsthm}

\theoremstyle{plain}
\newtheorem{theorem}{Theorem}[section]
\newtheorem{lemma}[theorem]{Lemma}
\newtheorem{proposition}[theorem]{Proposition}

\theoremstyle{definition}

\theoremstyle{remark}


\usepackage{amssymb} 

\usepackage{tikz}
\usepackage{graphicx}
\usetikzlibrary{arrows.meta, positioning, fit, calc, shapes.symbols, shapes.multipart, shapes.geometric, backgrounds}

\tikzset{
  >=Latex,
  proc/.style = {rectangle, rounded corners=2mm, draw, thick, align=center, fill=gray!3, minimum width=28mm, minimum height=12mm},
  data/.style = {cylinder, shape border rotate=90, aspect=0.25, draw, thick, fill=blue!5, minimum height=12mm, minimum width=16mm},
  ext/.style  = {rectangle, draw, thick, align=center, fill=orange!8, rounded corners=2mm, minimum width=26mm, minimum height=10mm},
  ui/.style   = {rectangle, draw, thick, align=left, fill=green!5, rounded corners=2mm, minimum width=38mm, minimum height=16mm},
  note/.style = {rectangle, align=left, rounded corners=2mm, inner sep=2pt, font=\footnotesize, fill=yellow!15, draw},
  flow/.style = {->, thick},
  thinflow/.style = {->, semithick, dashed},
  badgeok/.style = {circle, draw, fill=green!40!white, inner sep=0.5pt},
  badgewarn/.style = {circle, draw, fill=red!35!white, inner sep=0.5pt},
}

\usepackage[dvipsnames]{xcolor}   
\usepackage{inconsolata}          
\usepackage{minted}               
\usepackage[most]{tcolorbox}      
\tcbuselibrary{minted}            

\tcbset{
  listing engine=minted,
  enhanced,
  sharp corners,
  boxsep=1.2mm,
  left=2mm,right=2mm,top=1.5mm,bottom=1.5mm,
  colback=BrickRed!2,
  colframe=BrickRed!60!black,
  minted options={
    fontsize=\footnotesize,
    baselinestretch=1.1,
    breaklines,
    breakanywhere,
    breakautoindent,
    breakindent=1.25em,
    autogobble,
    tabsize=2,
    numbersep=6pt
  }
}

\title{Proof-Carrying Numbers (PCN): A Protocol for Trustworthy Numeric Answers from LLMs via Claim Verification}


\author{Aivin V. Solatorio \thanks{GitHub/HF: @avsolatorio, avsolatorio@gmail.com} \\
Development Data Group and Office of the Chief Statistician \\
The World Bank \\
1818 H Street N.W., \\
Washington, 20433 \\
District of Columbia, USA \\
\texttt{asolatorio@worldbank.org} 
}

%

\iclrfinalcopy 
\begin{document}

\maketitle

\begin{abstract}
Large Language Models (LLMs) as stochastic systems may generate numbers that deviate from available data, a failure known as \emph{numeric hallucination}. Existing safeguards—retrieval-augmented generation, citations, and uncertainty estimation—improve transparency but cannot guarantee fidelity: fabricated or misquoted values may still be displayed as if correct. We propose \textbf{Proof-Carrying Numbers (PCN)}, a presentation-layer protocol that enforces numeric fidelity through mechanical verification. Under PCN, numeric spans are emitted as \emph{claim-bound tokens} tied to structured claims, and a verifier checks each token under a declared policy (e.g., exact equality, rounding, aliases, or tolerance with qualifiers). Crucially, PCN places verification in the \emph{renderer}, not the model: only claim-checked numbers are marked as verified, and all others default to unverified. This separation prevents spoofing and guarantees fail-closed behavior. We formalize PCN and prove soundness, completeness under honest tokens, fail-closed behavior, and monotonicity under policy refinement. PCN is lightweight and model-agnostic, integrates seamlessly into existing applications, and can be extended with cryptographic commitments. By enforcing verification as a mandatory step before display, PCN establishes a simple contract for numerically sensitive settings: \emph{trust is earned only by proof}, while the absence of a mark communicates uncertainty.
\end{abstract}

\section{Introduction}

Large Language Models (LLMs) are emerging as powerful interfaces for accessing knowledge in domains ranging from healthcare and finance to economics and international development. Their fluency makes them attractive to a wide range of users—from policymakers and researchers to clinicians, financial analysts, and the public—but their usefulness is constrained by their stochastic nature: they may generate \textbf{numeric hallucinations}.

Even when given correct input, LLMs may still produce plausible but incorrect values—sometimes citing the right dataset while presenting the wrong figure \citep{ji_survey_2023, banerjee_llms_2024, xu_hallucination_2025, kalai_why_2025}. For example, \cite{wu_automated_2025} showed that when provided a perturbed drug dosage, an LLM sometimes ``corrected'' it to a different value. In another case, a model might state that the Philippines’ GDP growth in 2024 was 6\% when the official figure published by \cite{the_world_bank_world_2025} was 5.7\%. Small deviations like these can erode trust and cascade into flawed medical guidance, misinformed policy, or reputational risks for institutions.

Existing safeguards only partially address this problem. Retrieval-augmented generation \citep{lewis_retrieval-augmented_2020} grounds answers in source text, while citations and attribution frameworks \citep{wu_automated_2025, zhang_longcite_2025} increase transparency. However, both remain probabilistic: users often assume a cited number is faithful even when it has been misquoted or fabricated \citep{wu_clasheval_2025, hakim_need_2025}. Similarly, uncertainty estimation \citep{manakul_selfcheckgpt_2023} and self-verification approaches can flag suspicious values but offer no binding guarantee.

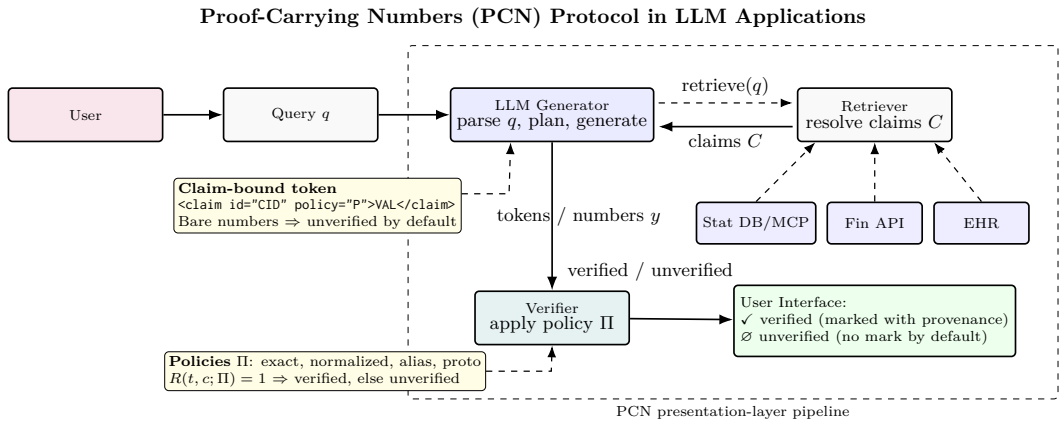
\begin{figure}[t]
\centering
\resizebox{\linewidth}{!}{%
\begin{tikzpicture}[
  font=\scriptsize,
  >=Latex,
  node distance=10mm and 16mm,
  proc/.style = {rectangle, rounded corners=2pt, draw, thick, align=center, fill=gray!5, minimum width=26mm, minimum height=9mm},
  ui/.style   = {rectangle, rounded corners=2pt, draw, thick, align=left, fill=green!7, minimum width=38mm, minimum height=12mm},
  data/.style = {rectangle, rounded corners=2pt, draw, thick, align=center, fill=blue!7, minimum width=16mm, minimum height=7mm},
  note/.style = {rectangle, align=left, rounded corners=2pt, inner sep=2pt, fill=yellow!12, draw},
  flow/.style = {->, thick},
  req/.style  = {->, semithick, dashed},
  feed/.style = {->, semithick, dashed},
  every node/.style={inner sep=3pt}
]


\node[proc, fill=purple!10] (user) {User};
\node[proc, right=10mm of user] (query) {Query $q$};
\node[proc, right=12mm of query, fill=blue!8] (gen) {LLM Generator\\\footnotesize parse $q$, plan, generate};
\draw[flow] (user) -- (query);
\draw[flow] (query) -- (gen);

\node[proc, right=24mm of gen] (retr) {Retriever\\\footnotesize resolve claims $C$};
\node[proc, below=25mm of gen, fill=teal!10] (ver) {Verifier\\\footnotesize apply policy $\Pi$};
\node[ui, right=22mm of ver, below=24mm of retr] (ui) {User Interface:\\
\(\checkmark\) verified (marked with provenance)\\
\(\varnothing\) unverified (no mark by default)};

\draw[req, shorten >=2pt, shorten <=2pt] 
  ([yshift=+2mm]gen.east) -- node[above]{\footnotesize retrieve$(q)$}
  ([yshift=+2mm]retr.west);
\draw[flow, shorten >=2pt, shorten <=2pt] 
  ([yshift=-2mm]retr.west) -- node[below]{\footnotesize claims $C$}
  ([yshift=-2mm]gen.east);

\draw[flow] (gen) -- node[right, xshift=-30pt]{\footnotesize tokens / numbers $y$} (ver);
\draw[flow] (ver) -- node[above, xshift=-15pt, yshift=15pt]{\footnotesize verified / unverified} (ui);

\node[data, below=10mm of retr, xshift=-20mm] (statdb) {Stat DB/MCP};
\node[data, below=10mm of retr]                (finapi) {Fin API};
\node[data, below=10mm of retr, xshift=+18mm] (ehr)    {EHR};

\path let \p1 = (retr.south) in
  coordinate (rA) at ($(\p1)+(-10mm,0)$)
  coordinate (rB) at (\p1)
  coordinate (rC) at ($(\p1)+( 10mm,0)$);
\draw[feed] (statdb.north) -- (rA);
\draw[feed] (finapi.north) -- (rB);
\draw[feed] (ehr.north)    -- (rC);

\node[note, below left=6mm and -2mm of gen] (toknote) {\textbf{Claim-bound token}\\
\texttt{<claim id="CID" policy="P">VAL</claim>}\\
Bare numbers $\Rightarrow$ unverified by default};
\draw[req] ([yshift=2mm]toknote.east) -- ++(8mm,0) -- ([xshift=-7mm]gen.south);

\node[note, below left=1mm and -2mm of ver] (polnote) {\textbf{Policies} $\Pi$: exact, normalized, alias, proto\\
$R(t,c;\Pi)=1$ $\Rightarrow$ verified, else unverified};
\draw[req] (polnote.east) -- ++(11mm,0) -- (ver.south);

\begin{scope}[on background layer]
  \node[
    draw, dashed, rounded corners=2pt,
    fit=(gen)(retr)(ver)(ui),
    inner sep=20pt,
    label={[font=\scriptsize]south:PCN presentation-layer pipeline}
  ] {};
\end{scope}

\node[above=5pt,font=\bfseries] at (current bounding box.north) {Proof-Carrying Numbers (PCN) Protocol in LLM Applications};

\end{tikzpicture}%
}
\caption{PCN-compliant architecture with LLM-initiated retrieval. The LLM first parses the query and \emph{requests} claims from the retriever (dashed top lane); the retriever returns the claim set $C$ on a separate solid lane. The LLM emits claim-bound tokens (or bare numbers). The verifier checks tokens under policy $\Pi$, and the UI renders verified values with provenance marks; absence of a mark implies unverified by default. External structured data sources feed the retriever via parallel dashed feeders.}
\label{fig:pcn-architecture}
\end{figure}

\begin{figure}[t]
\begin{center}
\includegraphics[width=\linewidth]{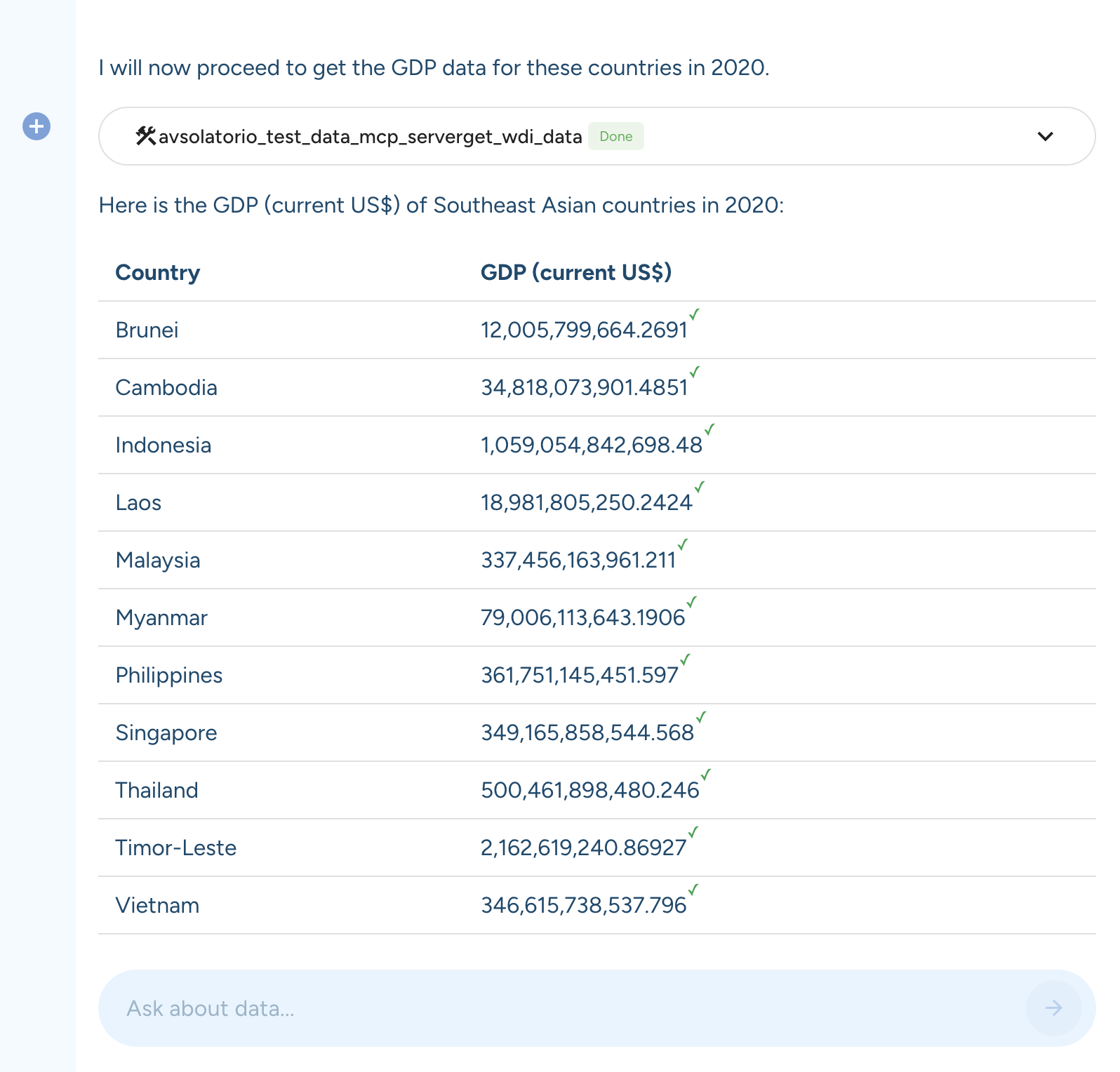}
\end{center}
\caption{Implementation of the Proof-Carrying Numbers (PCN) protocol in an LLM chat application. The LLM retrieves authoritative claims via an MCP server; the verifier applies PCN policies; and the interface renders verified numbers with explicit badges.}
\label{fig:pcn-chat-demo}
\end{figure}

We argue that numeric hallucination is best understood as a \textbf{presentation-layer problem}. Even when authoritative claims are retrieved, LLMs are unreliable at reproducing values faithfully \citep{banerjee_llms_2024, xu_hallucination_2025}, and user interfaces lack systematic safeguards against drift or fabrication.

To address this gap, we propose \textbf{Proof-Carrying Numbers (PCN)}, a protocol that requires every displayed number to be bound to an authoritative claim and verified before presentation. Loosely inspired by proof-carrying code \citep{necula_proof-carrying_1997}, PCN embeds verifiability directly into the interface: verified numbers carry explicit provenance, while unverifiable ones are blocked, flagged, or corrected.

Our contributions are threefold:
\begin{enumerate}
    \item We reframe numeric hallucination as a \emph{presentation-layer problem}, showing why existing safeguards 
    (retrieval, citations, uncertainty estimation) cannot provide binding guarantees.
    \item We design the \textbf{Proof-Carrying Numbers (PCN)} protocol, specifying a claim schema, token syntax, and 
    verifier policies that enforce a fail-closed contract at display time.
    \item We show how PCN wraps inherently fallible LLM outputs in a deterministic contract: numeric spans are either 
    \emph{Verified} against authoritative claims with provenance, remain \emph{Bare} if unclaimed, or are \emph{Flagged} 
    when verification fails.
\end{enumerate}

By embedding verification into the presentation pipeline, PCN bridges the gap between LLM fluency and the trustworthiness required in high-stakes numeric applications.

\section{Background and Related Work}

Hallucination in large language models (LLMs)—the generation of fluent but incorrect content—poses serious challenges across domains \citep{ji_survey_2023, banerjee_llms_2024, xu_hallucination_2025}. Of particular concern is \emph{numeric hallucination}, where even small deviations (e.g., reporting 6.0\% instead of 5.7\%) can undermine high-stakes applications in policy, healthcare, and finance \citep{kim_medical_2025, kang_deficiency_2023}.

One stream of work grounds model outputs in retrieved content. Retrieval-augmented generation \citep{lewis_retrieval-augmented_2020} and citation frameworks \citep{wu_automated_2025, schreieder_attribution_2025, zhang_longcite_2025} improve transparency by linking generated text to sources. However, fabricated values may still appear alongside credible references, creating the illusion of fidelity \citep{wu_clasheval_2025, hakim_need_2025}.

Another line of research focuses on post-hoc verification. Frameworks such as FEVER \citep{thorne_fever_2018}, FEVEROUS \citep{aly_feverous_2021}, TabFact \citep{chen_tabfact_2020}, and SciFact \citep{wadden_scifact-open_2022} decompose outputs into claims and check them against evidence. Recent tools like AttributionBench \citep{li_attributionbench_2024} and SourceCheckup \citep{wu_automated_2025} extend this approach to LLMs. While useful for auditing, these methods are retrospective: they may flag erroneous outputs but cannot prevent unverified numbers from being displayed. Uncertainty-based methods \citep{manakul_selfcheckgpt_2023} similarly attempt to detect hallucinations using entropy \citep{farquhar_detecting_2024}, calibration \citep{manakul_selfcheckgpt_2023}, or self-consistency \citep{kadavath_language_2022}, but confident models may still produce incorrect values with low uncertainty.

Structured decoding and symbolic grounding introduce additional constraints \citep{geng_grammar-constrained_2023}. Schema-constrained decoding enforces well-formed outputs, while symbolic methods such as SymGen \citep{hennigen_towards_2024} interleave generated text with explicit references to underlying data. These annotations reduce the burden of manual validation and make provenance more interpretable, but they stop short of guaranteeing fidelity: fabricated numbers can still appear structurally “valid” without being faithful.

At the data level, provenance frameworks like W3C Verifiable Credentials \citep{noauthor_verifiable_nodate} use public key infrastructure to certify origin. While effective for data ingress, these assurances are lost once values are processed by an LLM, which may alter or fabricate outputs without detection.

In summary, prior work has improved transparency and auditing but cannot guarantee that the number ultimately displayed to the user is the one retrieved from an authoritative source. Proof-Carrying Numbers (PCN) addresses this gap by shifting from \emph{annotation} and \emph{detection} to \emph{machine-enforced verification}. Under PCN, a number is marked as “verified” only if it is bound to an authoritative claim and passes deterministic checks at the presentation layer.

\section{Problem Formalization}
\label{sec:formalization}

\subsection{Context}
Numeric hallucination is often framed as a retrieval problem, but many failures arise at the 
\emph{presentation layer}: even when correct values are accessible, the number ultimately shown 
to the user may drift. PCN enforces a simple contract: a displayed numeric span is either 
\textsc{Verified}—because it can be mechanically matched to a structured claim under a declared 
policy~$\Pi$—or it remains unverified (as \emph{Bare} or \emph{Flagged}). Verified marks thus 
provide positive guarantees, while their absence communicates uncertainty without suppressing content.

\subsection{Setting}
\label{sec:form-setting}
Consider a user query $q$ to an AI application that integrates structured data through a database, 
API, or Model Context Protocol (MCP) server \citep{anthropic_introducing_nodate}. The application 
resolves $q$ into a finite claim set
\[
C = \{c_1, c_2, \ldots, c_n\},
\]
where each claim $c$ has the form
\[
c = \langle \text{claim\_id}, \text{indicator}, \text{entity}, \text{time}, v^\ast, u, m \rangle.
\]
Here $v^\ast \in \mathbb{R}$ is the reference value, $u$ the unit, and $m$ metadata (e.g., dataset 
version). Claims may optionally be cryptographically signed, though PCN does not assume a particular 
trust model.

\subsection{Claim-Bound Tokens and Bare Numbers}
\label{sec:form-claim-bound}
An LLM generates an output sequence $y = (y_1, \ldots, y_T)$ that may contain numeric spans. 
PCN requires numeric values to be emitted as \emph{claim-bound tokens} ($t$):
\[
\texttt{<claim id="CID" policy="P">VAL</claim>}
\]
where \texttt{CID} links to some $c \in C$, \texttt{VAL} is the displayed number, and 
\texttt{P} optionally specifies a verification policy. Such tokens bind surface text to structured claims.

By contrast, a \emph{Bare number} is emitted without a claim tag. Since it cannot be linked to any 
$c \in C$, it is always treated as unverified. Bare numbers may still appear in text, but never carry 
a verification mark.

\subsection{Verification Relation and Policies}
\label{sec:form-verification}
To decide whether a token $t$ matches a claim $c$, PCN defines a verification relation 
$R(t, c; \Pi)$. Let $\hat{v}$ be the numeric payload of $t$ normalized into the claim’s unit $u$. 
Verification modes supported by $\Pi$ include:

\begin{itemize}
    \item \textbf{Exact match:} $R_{\text{exact}}(t, c) = 1 \iff \hat{v} = v^\ast$.
    \item \textbf{Rounded match:} for decimal precision $d$, 
    \[
    R^{(d)}_{\text{round}}(t, c) = 1 \iff \mathrm{round}_d(\hat{v}) = \mathrm{round}_d(v^\ast).
    \]
    \item \textbf{Alias equivalence:} for sanctioned scale/alias set $S$ (e.g., $\{10^3,\text{K, thousand}\}$),
    \[
    R_{\text{alias}}(t, c) = 1 \iff \exists s \in S:\ \hat{v}\cdot s = v^\ast.
    \]
    \item \textbf{Tolerance with qualifiers:} for tolerance parameters $(\delta, \rho)$ and qualifier set $Q$ (e.g., \{``about'', ``approximately''\}),
    \[
    R^{(\delta, \rho)}_{\text{tol}}(t, c) = 1 \iff \hat{v} \in 
    [v^\ast - \max(\delta, \rho |v^\ast|),\ v^\ast + \max(\delta, \rho |v^\ast|)]
    \]
    and $t$ includes a qualifier in $Q$.
\end{itemize}

A policy $\Pi$ specifies which relations are permitted. Formally,
\[
R(t, c; \Pi) = 1 \iff \exists \text{ allowed mode in } \Pi \text{ such that it holds.}
\]
If $t$ has no claim reference or $R(t,c;\Pi)=0$, the number is treated as unverified.

\subsection{Running Example}
Suppose $C$ contains
\[
c = \langle \text{``clm\_7ef6''}, \text{GDP growth}, \text{PHL}, 2024, 5.7, \%, m \rangle.
\]
If the LLM emits
\[
\texttt{<claim id="clm\_7ef6" policy="round1">5.7</claim>}
\]
and $\Pi$ allows rounding to one decimal, verification succeeds since 
$\mathrm{round}_1(5.7)=5.7$. If the LLM emits
\[
\texttt{<claim id="clm\_7ef6" policy="int">6</claim>}
\]
and $\Pi$ allows rounding to the nearest integer, verification again succeeds since 
$\mathrm{round}_0(5.7)=6$. By contrast, if the LLM emits \texttt{6.0} or even \texttt{5.7} without a claim tag, the number is \emph{Bare} and displayed without a verification mark.

\subsection{Problem Statement}
\label{sec:form-problem}
Given a query $q$, a claim set $C$, and an output sequence $y$ with numeric spans $\{t_j\}$, the 
system must ensure
\[
\forall t_j \in y,\quad t_j \text{ is either verified against some } c \in C \text{ under policy } \Pi,\ 
\text{or surfaced as unverified.}
\]
The objective is to close the \textbf{presentation-layer verification gap}: every displayed number 
is either verifiably linked to a claim under $\Pi$ or left unverified by default.

\section{Proposed Approach: Proof-Carrying Numbers}

We introduce \textbf{Proof-Carrying Numbers (PCN)}, a protocol that enforces numeric fidelity by 
requiring that values shown to users carry verifiable links to structured claims. Building on the 
formalization in Section~\ref{sec:formalization}, PCN is not a decoding constraint but a 
\emph{presentation-layer contract}: every displayed number is either mechanically verified against 
a claim or presented as unverified. This section describes PCN’s architecture, verification policies, 
user contract, and possible extensions.

\subsection{Conceptual Overview}
PCN integrates verification into the rendering pipeline. Numeric spans generated by an LLM are 
annotated with claim references, checked against structured data, and rendered with explicit status 
indicators. This design closes the fidelity gap: LLMs can generate fluent text, but only numbers that 
pass verification are displayed with verified badges, while all others remain Bare or Flagged. 

\subsection{System Architecture}
As illustrated in Figure~\ref{fig:pcn-architecture}, PCN consists of four lightweight components:

\begin{enumerate}
    \item \textbf{Retriever:} resolves a query $q$ into a set of structured claims 
    $C = \{c_1, \ldots, c_n\}$ from a data source such as a statistical database, financial API, 
    medical record service, or MCP server.
    \item \textbf{Generator:} produces an output sequence $y$ that may include claim-bound tokens 
    (Section~\ref{sec:form-claim-bound}). Bare numbers may also appear, but they carry no proof.
    \item \textbf{Verifier:} checks each token against the claim set under policy $\Pi$, succeeding 
    if $R(t,c;\Pi)=1$ and otherwise labeling the value as Bare or Flagged.
    \item \textbf{User Interface:} renders Verified numbers with explicit provenance marks (e.g., a 
    badge and hoverable metadata). Bare numbers appear without a mark, while Flagged values are shown 
    with a warning indicator. The absence of a mark \emph{by default} communicates that a number is 
    not guaranteed.
\end{enumerate}

This architecture is modular and lightweight, making PCN applicable to any system that integrates 
LLMs with structured data, regardless of retrieval protocol or model choice.

\subsection{Verification Policies}
Applications require different levels of strictness. PCN supports a range of policies, as defined in 
Section~\ref{sec:form-verification}, including exact equality, rounding to specified decimal places, 
alias equivalence (e.g., ``K'' for thousands), and tolerance with qualifiers (e.g., ``about,'' 
``roughly''). Policies encode an explicit trade-off: stricter rules provide higher trust but lower 
coverage, while permissive ones expand coverage at the cost of precision. This explicit policy layer 
distinguishes PCN from schema-based decoding, which constrains format but not correctness.

\subsection{User Contract}
PCN enforces a \emph{fail-closed} contract. Users can rely on two guarantees:
\begin{enumerate}
    \item Values marked as Verified have been mechanically checked against a claim under policy $\Pi$ 
    and are displayed with provenance. 
    \item Values without such a mark are not verified, whether Bare (unclaimed) or Flagged 
    (failed verification), and should be interpreted with caution.
\end{enumerate}
This shifts the default assumption: current applications implicitly present all numbers as 
trustworthy, while PCN makes trust explicit and earned. This subtle change in user experience is critical: it enables end-users---whether policymakers, clinicians, or financial analysts---to rely on verified numbers, distinguishing them from potential hallucinations.

\subsection{Extensions: Cryptographic Proofs}
PCN can be extended to settings requiring stronger provenance. Claims may embed cryptographic 
commitments such as Merkle proofs for large tables or PKI signatures for multi-provider trust chains. 
In such cases, verification not only checks numeric fidelity but also validates claim authenticity. 
These extensions strengthen tamper-evidence without altering the core contract: a number is verified 
only if it is mechanically tied to an authoritative claim.

\section{Correctness Guarantees}
\label{sec:guarantees}
We analyze the guarantees provided by PCN, given a generated sequence 
$y$, structured claim set $C$, acceptance policy $\Pi$, and verification relation 
$R(t,c;\Pi) \in \{0,1\}$. The acceptance function is defined as:
\[
A(y,C;\Pi) \mapsto \{(t_j, \text{label})\}_j
\]
which labels each numeric span $t_j \in y$ as either \textsc{Verified} or \textsc{Unverified}.

\subsection{Core Properties}

\begin{theorem}[Soundness]
\label{thm:soundness}
If $A(y,C;\Pi)$ labels $t$ as \textsc{Verified}, then there exists a claim 
$c \in C$ such that $R(t,c;\Pi)=1$.
\end{theorem}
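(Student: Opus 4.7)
The plan is to unpack the definition of the acceptance function $A(y,C;\Pi)$ as specified in Section~\ref{sec:form-verification} and in the Verifier component of the system architecture, and to observe that the \textsc{Verified} label is assigned only through a mechanical check that already produces a witness for the verification relation. The statement therefore reduces to an immediate consequence of how $A$ is constructed; there is no nontrivial inequality to bound or limit to pass, only a careful enumeration of the cases under which a label can be emitted.

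First, I would fix an arbitrary numeric span $t \in y$ and suppose that $A(y,C;\Pi)$ assigns it the label \textsc{Verified}. By the Verifier specification, this label is produced only after (i) parsing $t$ as a claim-bound token of the form \texttt{<claim id="CID" policy="P">VAL</claim>}, (ii) resolving \texttt{CID} to some $c \in C$, and (iii) evaluating $R(t,c;\Pi)$ and finding that it equals $1$. If any of these steps fails---$t$ is a Bare number, \texttt{CID} is missing or unresolvable, or $R(t,c;\Pi)=0$ under every allowed mode of $\Pi$---then $t$ is labeled \textsc{Unverified} (either Bare or Flagged) by the fail-closed default. Consequently, the \textsc{Verified} label necessarily exhibits a specific $c \in C$ for which $R(t,c;\Pi)=1$, which is exactly the claim the theorem demands.

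Second, I would observe that the disjunctive structure of $R$ under a composite policy $\Pi$ (admitting exact, rounded, alias, or tolerance-with-qualifier modes) does not complicate the argument: by Section~\ref{sec:form-verification}, $R(t,c;\Pi)=1$ iff \emph{some} mode permitted by $\Pi$ holds, and the verifier declares success as soon as such a mode is found. That mode, together with the resolved claim $c$, is the witness. No probabilistic, heuristic, or fallback branch contributes to the \textsc{Verified} label, so the implication holds uniformly across all policy configurations allowed by the framework.

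The ``main obstacle'' is not mathematical but architectural: the theorem is almost definitional, and its proof rests entirely on the fact that the label is produced by the renderer rather than by the model. To prevent the claim from reading as vacuous, I would add a short remark emphasizing that even if the generator emits a token that \emph{looks} verified---for instance, a well-formed claim wrapper around an incorrect value---the verifier still runs $R(t,c;\Pi)$ before any promotion, so spoofing is ruled out by construction. This is the operational content that gives the otherwise tautological statement its force, and it is what must be preserved by any implementation that claims PCN compliance.
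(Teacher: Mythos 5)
Your proof is correct and follows the same route as the paper's own (one-line) proof sketch: the \textsc{Verified} label is assigned only by the verifier after resolving the claim reference and checking $R(t,c;\Pi)=1$, so the witness claim $c$ exists by construction. Your elaboration of the case enumeration and the disjunctive policy modes simply fills in details the paper leaves implicit; no new idea or different decomposition is involved.
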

\begin{proof}[Proof sketch]
By construction, the verifier only assigns \textsc{Verified} if it finds such a claim. 
Hence no fabricated value can be marked as \textsc{Verified}.
\end{proof}

\begin{theorem}[Completeness under honest tokens]
If the generator emits a claim-bound token $t$ referencing some $c \in C$ and 
$R(t,c;\Pi)=1$, then $A$ labels $t$ as \textsc{Verified}.
\end{theorem}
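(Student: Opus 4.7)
The plan is to unfold the definition of the acceptance function $A$ from Section~\ref{sec:form-verification} and show that the two hypotheses, namely that $t$ is claim-bound with CID referencing some $c \in C$ and that $R(t,c;\Pi)=1$, exactly match the positive branch of the verifier's decision rule. Because the verifier is a deterministic mechanical procedure at the presentation layer, no probabilistic or model-dependent reasoning is needed; the statement is essentially a definitional converse to Theorem~\ref{thm:soundness}.

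First I would recall that a claim-bound token has the form \texttt{<claim id="CID" policy="P">VAL</claim>}, so the verifier can syntactically extract the referenced CID, the payload \texttt{VAL}, and the policy qualifier \texttt{P}. Second, I would invoke the hypothesis that the referenced claim lies in $C$, which guarantees the CID lookup succeeds and returns a specific $c$. Third, I would normalize the payload into the claim's unit $u$ as prescribed in Section~\ref{sec:form-verification} to obtain $\hat v$, so that $R(t,c;\Pi)$ is well-defined on the extracted data. Then, using the hypothesis $R(t,c;\Pi)=1$, some allowed mode in $\Pi$ holds between $t$ and $c$; since the verifier by construction evaluates $R$ on the retrieved claim and assigns \textsc{Verified} whenever this relation evaluates to $1$, the labelling step must return \textsc{Verified} for $t$, and applying this pointwise to the output sequence yields $A(y,C;\Pi)$ marking $t$ as \textsc{Verified}.

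The hard part is not the logic but pinning down two hidden preconditions implicit in the informal specification: (i) the CID-to-claim map is functional, so there is no ambiguous resolution, and (ii) the verifier is \emph{complete with respect to $\Pi$}, in the sense that it enumerates all modes allowed by $\Pi$ rather than short-circuiting on a non-matching mode and prematurely returning Flagged. Both are reasonable design requirements already implicit in the definition of $R(t,c;\Pi)$ via the existential quantifier over allowed modes, so I would state them up front as part of the verifier's operational semantics; once this is done the proof reduces to a one-line unfolding of $A$.
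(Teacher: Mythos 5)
Your proposal is correct and follows essentially the same route as the paper, which disposes of the theorem in one line by noting that the deterministic verifier accepts every policy-compliant token; your unfolding of the acceptance rule, with the explicit side conditions that CID resolution is functional and that the verifier exhausts all modes allowed by $\Pi$, is simply a more careful spelling-out of that same definitional argument.
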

\begin{proof}[Proof sketch]
Determinism of the verifier ensures all policy-compliant tokens are accepted.
\end{proof}

\begin{theorem}[Fail-Closed]
Any span that (i) lacks a valid claim reference, (ii) references a non-existent claim, 
or (iii) fails verification under $\Pi$ is labeled \textsc{Unverified}.
\end{theorem}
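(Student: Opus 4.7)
The plan is to argue by exhaustive case analysis on the three failure modes, using soundness (Theorem~\ref{thm:soundness}) as the main lever and the binary nature of the label set to close the argument. Since $A(y,C;\Pi)$ assigns each span exactly one of \textsc{Verified} or \textsc{Unverified}, it suffices to show that in each of the three failure modes the label \textsc{Verified} is unattainable; the default then forces \textsc{Unverified}.

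First, I would fix an arbitrary numeric span $t$ in $y$ and consider the three cases separately. In case (i), $t$ is a bare number or otherwise carries no parsable \texttt{claim id="CID"}; by the specification in Section~\ref{sec:form-claim-bound}, such spans are defined to be unverified, so the verifier never invokes $R(t,c;\Pi)$ against any $c\in C$, and no witnessing claim exists. In case (ii), $t$ carries a \texttt{CID} but no $c\in C$ has that identifier, so again no admissible $c$ is available for $R$ to return $1$. In case (iii), a matching $c$ exists but $R(t,c;\Pi)=0$ under every allowed mode in $\Pi$, which is exactly the definition from Section~\ref{sec:form-verification}.

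Next, I would close each case by contradiction via Theorem~\ref{thm:soundness}: if $A$ were to label $t$ as \textsc{Verified} in any of these cases, soundness would furnish some $c\in C$ with $R(t,c;\Pi)=1$, contradicting (i)--(iii) respectively (either no claim reference to supply $c$, no such $c$ in $C$, or $R$ evaluates to $0$). Hence $A$ must assign \textsc{Unverified}, which is the fail-closed default.

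The only mildly delicate point, and the one I expect to be the main obstacle, is making the case analysis truly exhaustive: I need to assert that ``no valid claim reference,'' ``reference to a non-existent claim,'' and ``failure under $\Pi$'' cover every way a span can end up without a witness for $R=1$, and that the verifier's default rule applies uniformly in all three. I would handle this by explicitly invoking the definition of $A$ from Section~\ref{sec:guarantees} and the claim-binding syntax from Section~\ref{sec:form-claim-bound}, treating ``no valid reference'' broadly enough to subsume malformed tags (which can be folded into case (i) by declaring any unparsable reference invalid). With this convention in place, the three cases partition the complement of the \textsc{Verified} precondition, and the fail-closed default completes the proof.
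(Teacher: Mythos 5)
Your proposal is correct in substance, but it routes the argument differently from the paper. The paper's proof is a one-line appeal to the verifier's construction: the acceptance function $A$ assigns \textsc{Unverified} by default and only switches to \textsc{Verified} when an explicit match with a referenced claim is found, so all three failure modes fall into the default branch immediately. You instead present soundness (Theorem~\ref{thm:soundness}) as the main lever and argue by contradiction in each case. That detour is unnecessary and, in case (iii), slightly loose: soundness only guarantees the \emph{existence} of some $c \in C$ with $R(t,c;\Pi)=1$ whenever $t$ is labeled \textsc{Verified}, whereas your case (iii) hypothesis is that $R$ fails for the claim the token actually references. To close that mismatch you must additionally invoke the fact that the verifier only ever evaluates $R$ against the referenced claim --- which is precisely the direct, default-rule argument the paper uses, making the contradiction via soundness redundant (and arguably circular, since both theorems are consequences of the same construction of $A$). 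Your handling of exhaustiveness --- folding malformed or unparsable tags into case (i) and noting that the label set is binary so that blocking \textsc{Verified} forces \textsc{Unverified} --- is a useful explicit addition that the paper leaves implicit; the cleanest version of your proof would keep that case analysis and the default-rule reasoning you already state, and drop the appeal to Theorem~\ref{thm:soundness} altogether.
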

\begin{proof}[Proof sketch]
The acceptance function defaults to \textsc{Unverified} unless an explicit match is found.
\end{proof}

\begin{lemma}[Monotonicity under policy refinement]
\label{lem:monotone}
If $\Pi_1 \preceq \Pi_2$ (i.e., $\Pi_1$ is stricter), then:
\[
\{t : \textsc{Verified}_{\Pi_1}(t)\} \subseteq \{t : \textsc{Verified}_{\Pi_2}(t)\}.
\]
\end{lemma}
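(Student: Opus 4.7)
The plan is to reduce monotonicity to a pointwise containment of acceptance regions and then invoke the existential definition of $R(t,c;\Pi)$. First I would pin down a single formal meaning of $\Pi_1 \preceq \Pi_2$: whenever some allowed mode in $\Pi_1$ fires at a pair $(t,c)$, some allowed mode in $\Pi_2$ also fires at $(t,c)$. This is the most natural reading of "stricter" for the definition of $R$ given in Section~\ref{sec:form-verification}, since that definition is itself a disjunction over allowed modes, and it uniformly subsumes both discrete refinements (adding new modes to the permitted set, e.g.\ enabling alias equivalence) and parametric ones (enlarging $\delta$ or $\rho$, or coarsening the decimal precision $d$), because each concrete mode listed has an acceptance region that grows monotonically in the permissive direction of its parameters.

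Next I would take an arbitrary $t$ with $\textsc{Verified}_{\Pi_1}(t)$. By Theorem~\ref{thm:soundness} there exists $c \in C$ with $R(t,c;\Pi_1)=1$; unfolding the disjunction, some allowed mode $\phi \in \Pi_1$ fires at $(t,c)$. The refinement hypothesis then supplies a witnessing mode $\phi' \in \Pi_2$ with $\phi'(t,c)=1$, and existential quantification over modes yields $R(t,c;\Pi_2)=1$. Completeness under honest tokens labels $t$ as $\textsc{Verified}_{\Pi_2}$, which gives the desired set inclusion.

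The main obstacle is definitional rather than mathematical: the notion of "stricter" must be chosen broadly enough to cover both mode-set containment and parameter tightening at once, since neither alone captures the intended meaning. Adopting the acceptance-region formulation above collapses the lemma to existential monotonicity and avoids a case split over the four concrete modes of Section~\ref{sec:form-verification}. A minor subtlety worth flagging in the write-up is that the witnessing mode $\phi' \in \Pi_2$ need not coincide with $\phi$; for instance, if $\Pi_2$ replaces exact matching with a strictly looser tolerance mode, the witness changes. The argument must therefore be phrased in terms of the relation $R(t,c;\Pi)$ as a whole rather than any fixed mode, and I would state the refinement condition at that level from the outset.
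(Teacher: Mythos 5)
Your proposal is correct and follows essentially the same route as the paper, which proves the lemma by the one-line observation that relaxing a policy never removes a token from the \textsc{Verified} set; you simply make precise what the paper leaves implicit, namely that $\Pi_1 \preceq \Pi_2$ means every mode firing under $\Pi_1$ is subsumed by some mode firing under $\Pi_2$, so the disjunctive definition of $R(t,c;\Pi)$ immediately transfers the witness. Your added care about the witnessing mode possibly changing and about covering both mode-set and parameter refinements is a useful sharpening but not a different argument.
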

\begin{proof}[Proof sketch]
Tightening policies reduces coverage but never introduces false positives.
\end{proof}

\paragraph{Implications.}
Applications can expose multiple presets (e.g., \emph{strict}, \emph{rounded}, \emph{approximate}) with predictable effects on the Verified set. Tightening a policy cannot introduce false positives; relaxing a policy cannot demote a previously verified token.

\subsection{Robustness to Spoofing}
\begin{theorem}[Renderer robustness]
\label{sec:spoofing}
If verification status is computed by the renderer rather than text tokens, 
then adversarial attempts to inject symbols ($\checkmark$, ``verified'', HTML tags) into $y$ 
cannot cause $A$ to mislabel an \textsc{Unverified} token as \textsc{Verified}.
\end{theorem}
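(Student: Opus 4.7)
My plan is to reduce renderer robustness to Soundness (Theorem~\ref{thm:soundness}) together with the Fail-Closed guarantee and a protocol-level invariant that separates the emission layer from the rendering layer.

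First I would fix the threat model: the adversary chooses the sequence $y$ and may insert arbitrary strings (e.g., the word ``verified'', Unicode check marks, or HTML-looking fragments), but the claim set $C$, the policy $\Pi$, and the deterministic relation $R$ are outside its control. I would then note that $A$ assigns \textsc{Verified} to a numeric span $t_j$ purely as a function of $(t_j, C, \Pi)$ through $R$; no other characters of $y$ enter the labeling decision for $t_j$.

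Second, I would enumerate the two locations where injected material can reside: outside every numeric span, or inside one. In the first case, the material is not an input to the label of any $t_j$ and is therefore irrelevant to $A$. In the second, the injection is absorbed into $t_j$ itself; then either the modified $t_j$ still satisfies $R(t_j,c;\Pi)=1$ for some $c\in C$, in which case Soundness guarantees that the \textsc{Verified} label is legitimate rather than a mislabeling, or it does not, in which case Fail-Closed forces \textsc{Unverified}. Hence no path can promote a genuinely \textsc{Unverified} span to \textsc{Verified}.

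Finally I would invoke the renderer separation explicitly: the UI paints a provenance mark if and only if $A$ returns \textsc{Verified}, and never by copying glyphs from $y$ into its chrome. I expect the main obstacle to be making this last step rigorous rather than rhetorical, because it is genuinely an architectural invariant on conforming PCN renderers and not a property of $A$ in isolation. I would handle it by elevating it to an explicit assumption on any PCN-compliant UI, and remarking that any implementation that lifts mark-shaped characters out of $y$ into its trust chrome is by definition non-compliant and therefore outside the scope of the theorem.
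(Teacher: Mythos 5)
Your proposal is correct and follows essentially the same route as the paper, whose proof sketch simply observes that \textsc{Verified} status is derived solely from $R(t,c;\Pi)$ and that spoofed symbols are ignored by the parser and remain \textsc{Unverified}; your case split (injection outside vs.\ inside a span, with Soundness and Fail-Closed covering the latter) is a faithful elaboration of that argument. The architectural invariant you propose to elevate to an assumption is already built into the theorem's hypothesis (``if verification status is computed by the renderer rather than text tokens''), so no additional assumption is needed.
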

\begin{proof}[Proof sketch]
Verified status is derived solely from $R(t,c;\Pi)$.  Spoofed tokens are ignored by the parser and remain \textsc{Unverified}.
\end{proof}

This property ensures that the mark itself is trustworthy and cannot be faked 
by prompt injection or adversarial text formatting.





\subsection{Efficiency}

\begin{proposition}[Linear-time verification]
\label{prop:complex}
Let $n \leq |y|$ be the number of numeric spans and $m = |C|$ the size of the claim set. 
If claims are indexed by identifier, then PCN verification runs in $O(n)$ time.
\end{proposition}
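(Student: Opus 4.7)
The plan is to bound the work associated with each of the $n$ numeric spans by a constant, and then sum. I would first split the verification pipeline into three pieces: (i) span enumeration over $y$, (ii) claim lookup for each span, and (iii) a per-span application of the verification relation $R(\cdot,\cdot;\Pi)$. The goal is to show each of (ii) and (iii) is $O(1)$ per span under the stated indexing assumption and under a bounded policy, so that the total cost is linear in $n$.

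Concretely, I would proceed as follows. First, walk $y$ once and identify each numeric span $t_j$, which can be done with a deterministic parser of the claim-bound token grammar in Section~3.3 in a single left-to-right pass, yielding the list $(t_1,\ldots,t_n)$ together with, for each $t_j$, either a \texttt{CID} and payload \texttt{VAL} or the tag ``bare''. Second, for each $t_j$ of type \emph{bare}, label it \textsc{Unverified} immediately by the fail-closed rule (Theorem on Fail-Closed behavior), costing $O(1)$. Third, for each claim-bound $t_j$, perform a single hash lookup of \texttt{CID} against the claim index; by the assumption that $C$ is indexed by identifier, this is $O(1)$. If no claim is found, mark \textsc{Unverified}; otherwise retrieve $c$ and evaluate $R(t_j,c;\Pi)$. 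The evaluation of $R$ reduces, by the definition in Section~3.4, to a bounded disjunction over the policy's allowed modes (exact, rounded to a fixed $d$, alias over a fixed sanctioned set $S$, and tolerance with qualifier set $Q$); each mode requires a constant number of arithmetic and string comparisons when $|\Pi|$, $|S|$, and $|Q|$ are treated as protocol-level constants. Summing over the $n$ spans yields $O(n)$.

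The main subtlety, and the step I would flag explicitly, is the accounting for the scan of $y$ itself, which is $\Theta(|y|)$ rather than $\Theta(n)$; since the proposition states $n\le|y|$ and reports $O(n)$, I would make clear that the claim concerns the verification stage proper, with span extraction either amortized against generation or assumed provided as input, so that the verifier's own work is $O(n)$. The secondary subtlety is the treatment of $\Pi$, $S$, and $Q$: if these are regarded as variable inputs rather than fixed, the per-span cost becomes $O(|\Pi|(|S|+|Q|))$, and I would note that the stated $O(n)$ bound silently absorbs these as constants of the deployed protocol. Finally, I would remark that $m=|C|$ does not appear in the bound precisely because the identifier index converts the otherwise $O(m)$ search into $O(1)$; without this assumption the bound degrades to $O(nm)$ for linear scan or $O(n\log m)$ for ordered search, which is consistent with Lemma~\ref{lem:monotone} in that the cost structure is independent of whether $\Pi$ is tightened or relaxed.
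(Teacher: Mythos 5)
Your proposal is correct and follows essentially the same route as the paper's proof sketch: a constant-time hash lookup per claim-bound span (justified by the identifier index) plus a constant-time policy check per span, summed over the $n$ spans. The additional caveats you flag (the $\Theta(|y|)$ scan versus the $O(n)$ verification stage, and treating $|\Pi|$, $|S|$, $|Q|$ as protocol-level constants) are sensible refinements of the same argument rather than a different approach.
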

\begin{proof}[Proof sketch]
Each span lookup reduces to a hash-table access in $O(1)$. 
Policy checks are constant-time (rounding, alias lookup, tolerance check).
\end{proof}

This ensures PCN verification remains negligible compared to LLM generation latency.

\subsection{Cryptographic Tamper-Evidence (Extension)}

\begin{theorem}[Unforgeability of provenance]
\label{thm:sec}
Assuming EUF-CMA security of the signature scheme and collision resistance of the hash, 
no adversary can cause $A$ to label a tampered claim as \textsc{Verified} 
except with negligible probability.
\end{theorem}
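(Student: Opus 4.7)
The plan is a standard game-hopping reduction: I will show that any polynomial-time adversary $\mathcal{A}$ that causes $A$ to label a tampered claim as \textsc{Verified} can be converted, via a black-box reduction, into an adversary against one of the two assumed primitives. First, I would formalize the extended verifier $A^{\mathrm{crypto}}$: each claim $c$ carries a signature $\sigma_c$ over a hash digest $h_c = H(\mathrm{enc}(c))$, where $\mathrm{enc}$ is a canonical encoding, and the renderer now labels a token $t$ as \textsc{Verified} only if (i) the referenced $c$ lies in the delivered claim set, (ii) $\mathrm{Verify}_{\mathsf{pk}}(\sigma_c, h_c) = 1$, and (iii) the content relation $R(t,c;\Pi)=1$ of Theorem~\ref{thm:soundness} holds. ``Tampered'' then means that the authoritative source never issued $c$ in the form consumed by the renderer.

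Next, I would define the security experiment: the challenger publishes $\mathsf{pk}$, plays the role of the trusted data provider, and answers signing queries on authentic claims requested by $\mathcal{A}$; $\mathcal{A}$ wins if it produces a transcript in which some numeric span is labeled \textsc{Verified} against a claim $c^\star$ that was never signed in its authentic encoded form. The reduction does a two-case split on the winning transcript. In Case~1, the verified claim $c^\star$ has digest $h^\star$ that was never submitted to the signing oracle; then $(\sigma_{c^\star}, h^\star)$ is a direct EUF-CMA forgery and can be output by a reduction $\mathcal{B}_{\mathrm{sig}}$ that simply forwards queries. In Case~2, $h^\star$ equals the digest of some previously signed authentic claim $c'$ with $c' \ne c^\star$ (as encoded strings); then the pair $(\mathrm{enc}(c'), \mathrm{enc}(c^\star))$ is a collision for $H$, which a reduction $\mathcal{B}_{\mathrm{hash}}$ extracts. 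By a union bound, $\Pr[\mathcal{A}\ \text{wins}] \le \mathrm{Adv}^{\mathrm{EUF\text{-}CMA}}_{\mathcal{B}_{\mathrm{sig}}} + \mathrm{Adv}^{\mathrm{CR}}_{\mathcal{B}_{\mathrm{hash}}}$, which is negligible by assumption.

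I would then invoke Theorem~\ref{sec:spoofing} to handle the orthogonal attack surface: even if $\mathcal{A}$ injects badges, HTML, or lookalike ``verified'' text into $y$, the label is computed by the renderer from $R$ and the signature check, so spoofed surface tokens cannot promote an \textsc{Unverified} span. Combining the cryptographic reduction with renderer robustness yields the stated guarantee that no adversary, except with negligible probability, causes $A$ to mislabel a tampered claim.

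The main obstacle I anticipate is pinning down the threat model crisply enough that the two cases are exhaustive and disjoint: in particular, specifying the canonical encoding $\mathrm{enc}$ so that semantically distinct claims always yield distinct byte strings (otherwise a ``semantic'' collision is neither a hash collision nor a signature forgery), and deciding whether the adversary is allowed to choose $\mathsf{pk}$ or merely observe it. I expect the encoding/canonicalization hypothesis to be the subtle step; the reduction itself is routine once the game is fixed.
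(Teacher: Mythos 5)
Your proposal is correct and follows essentially the same route as the paper, which gives only a one-sentence sketch ("verification requires a valid signature or Merkle proof; forging this reduces to breaking standard cryptographic assumptions") — your two-case reduction to EUF-CMA forgery versus hash collision is exactly the standard argument that sketch gestures at, spelled out in full, and your canonical-encoding caveat is a reasonable refinement rather than a deviation.
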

\begin{proof}[Proof sketch]
Verification requires a valid signature or Merkle proof. 
Forging this reduces to breaking standard cryptographic assumptions.
\end{proof}

\subsection{Summary}

Together, these results show that PCN provides:
\begin{itemize}
    \item \textbf{Correctness:} Verified numbers always correspond to claims (\ref{thm:soundness}–\ref{lem:monotone}).
    \item \textbf{Robustness:} Verification marks cannot be spoofed (\ref{sec:spoofing}).
    \item \textbf{Efficiency:} Verification cost is negligible (\ref{prop:complex}).
    \item \textbf{Security:} Tamper-evidence is cryptographically guaranteed (\ref{thm:sec}).
\end{itemize}

Unlike heuristic methods, PCN requires no probabilistic confidence scoring. Fidelity follows deterministically from the protocol’s construction.

\section{Discussion and Limitations}

Proof-Carrying Numbers (PCN) reframes numeric hallucination not as a question of whether the model ``knows'' the right value, but of what the user interface is permitted to display. By enforcing a fail-closed contract, PCN ensures that numbers marked as ``verified'' are mechanically tied to authoritative claims, while all others are visibly unverified. This shift moves trust from probabilistic model behavior to deterministic verification at the presentation layer.

\subsection{Scope of Guarantees}

PCN’s guarantees are intentionally modest but powerful. It does not claim that a model’s reasoning is sound or that a dataset is ``true.'' Instead, it guarantees that displayed numbers are either (i) verifiably consistent with a claim under a policy $\Pi$, or (ii) explicitly unverified. This closes one of the most dangerous loopholes in current AI systems: the undetected inclusion of fabricated numbers in fluent responses. In practice, this enables policymakers, clinicians, or analysts to treat verification marks as binding fidelity contracts, while interpreting unmarked numbers as provisional.

\subsection{Policy Design and Usability}

A defining feature of PCN is its \emph{policy layer}, which governs how closely numeric spans must match reference claims to receive verification. While applications may define their own policies, data providers can also publish canonical rules (e.g., rounding conventions, tolerances) alongside claims. This reduces the risk of arbitrary or inconsistent application-level choices. For example:

\begin{itemize}
    \item \textbf{Clinical dosage:} providers may require exact equality, reflecting 
    zero tolerance for deviation.
    \item \textbf{Macroeconomic growth:} agencies may allow one-decimal rounding to 
    match dissemination practices.
    \item \textbf{Journalistic communication:} datasets may permit approximate 
    expressions (``about,'' ``roughly'') within a bounded tolerance.
\end{itemize}

Policies strengthen provenance but raise challenges of interoperability and accountability: strict rules may reduce coverage, while inconsistent ones across sources complicate multi-provider verification. The monotonicity property (Theorem~5.4) ensures such trade-offs remain predictable.

\subsection{Risks and Threats}

PCN’s effectiveness depends on both human factors and adversarial resilience.

\paragraph{Verification coverage gaps.} 

PCN only verifies numbers that the LLM tags with claim-bound tokens. If tagging recall is poor, many numeric spans will remain \emph{Bare}—visible but without verification marks. This can create the perception that the system is unreliable, even though it reflects limitations in the LLM’s compliance rather than the verifier itself. Improving prompting, fine-tuning, or constrained decoding is therefore essential to make PCN useful in practice.

\paragraph{Policy misconfiguration.} 
Overly strict policies cause excessive verification failures, while permissive ones dilute guarantees. Since policies directly shape user trust, misconfiguration can either frustrate users or undermine fidelity. Clear presets (e.g., ``strict,'' ``tolerant'') mitigate this risk.

\paragraph{Overconfidence in scope.} 
PCN secures numeric fidelity only. Users may mistake badges for guarantees of holistic correctness, when reasoning and non-numeric facts remain outside its scope. Scope must be communicated explicitly.

\paragraph{Institutional responsibility.} 
Verification ties numbers to specific providers, strengthening provenance but also shifting accountability. Custodians may hesitate to participate if they fear liability. Adoption will require governance frameworks that distribute responsibility across model providers, developers, and institutions.

\paragraph{Adversarial considerations.} 
Common attack surfaces map directly onto PCN’s guarantees: fabricated values cannot be marked (fail-closed), spoofed symbols do not confer verification (renderer robustness), and tampering with claim stores can be mitigated by cryptographic commitments. Additional operational safeguards (e.g., version pinning, audit logs) further reduce risk. Identifier abuse and privacy remain implementation-level considerations.

\subsection{Integration and Overhead}

From an engineering standpoint, PCN is lightweight. Verification runs in $O(n)$ time over numeric spans and adds negligible latency compared to model decoding. This makes it practical as a drop-in module for RAG pipelines, LLM-based chatbots, or statistical portals. Early experiments suggest that prompting or light fine-tuning enables models to emit claim tags with reasonable recall, though further empirical work is needed.

\subsection{Broader Implications}

By decoupling fluency from fidelity, PCN reshapes incentives for trust in AI systems:
\begin{itemize}
    \item \textbf{Developers} can build applications where trust derives from the 
    verification pipeline rather than the model itself.
    \item \textbf{Institutions} (e.g., central banks, ministries of health, or 
    the World Bank) can act as trust anchors by supplying authoritative claims.
    \item \textbf{Users} gain a clear signal: trust is earned only by proof, 
    and the absence of a mark is itself informative.
\end{itemize}

This reframing does not solve factuality in general, but it addresses the class of errors with the highest downstream risk: misrepresented numbers. Even small numeric drifts can cascade into reputational or policy harms; PCN offers a minimal but enforceable safeguard.

\subsection{Limitations and Future Work}

PCN has clear boundaries: it cannot guarantee reasoning correctness, covers only structured claims, and depends on the availability of authoritative data sources. Its effectiveness also hinges on LLM cooperation in emitting claim-bound tokens. Future work should evaluate user behavior around verification marks, refine policy design trade-offs, extend PCN to derived values (e.g., ratios, aggregates) by verifying deterministic functions over atomic claims, and extend PCN to multi-provider cryptographic trust chains. Ultimately, PCN aims to make verification—not assumption—the default habit of numeric communication in AI.

\section{Conclusion}
\label{sec:conclusion}

We introduced \emph{Proof-Carrying Numbers} (PCN), a protocol that makes numeric fidelity a presentation-layer property. PCN binds displayed numbers to structured claims and verifies each span under an explicit policy~$\Pi$, yielding formal guarantees of soundness, completeness under honest tokens, and fail-closed behavior in which Bare or invalid spans never appear as \textsc{Verified}. Unlike retrieval, citation, or schema-only approaches, PCN treats verification as a first-class, mechanical step between model outputs and the user interface. The result is a simple contract: \emph{trust is earned only by proof}, while the absence of a mark communicates uncertainty without suppressing content.

PCN is domain-agnostic: claims may originate from statistical databases, clinical systems, financial APIs, or other structured sources. Its modular architecture (retriever, generator, verifier, UI) integrates with existing applications, and optional cryptographic commitments (signatures, Merkle proofs) strengthen provenance without altering the core contract.

The protocol’s scope is intentionally bounded. PCN guarantees correspondence to a chosen source, not ultimate truth, and it currently addresses atomic numeric spans rather than derived expressions or free-text facts. However, closing the verification gap at the presentation layer offers a practical step toward trustworthy numeric communication in LLM applications.

Future work includes developing SDKs and reference implementations across application stacks, designing adaptive policies such as tolerances with guardrails, and studying how users interact with verification marks and provenance cues. PCN could also be extended to cover derived values (e.g., ratios and aggregates) through deterministic functions over atomic claims, and to multi-provider deployments secured by cryptographic trust chains. Taken together, these directions position PCN as a minimal yet extensible blueprint for deploying LLMs in numerically sensitive settings with explicit, inspectable guarantees.

\bibliography{iclr2026_conference}

\begin{thebibliography}{26}
\providecommand{\natexlab}[1]{#1}
\providecommand{\url}[1]{\texttt{#1}}
\expandafter\ifx\csname urlstyle\endcsname\relax
  \providecommand{\doi}[1]{doi: #1}\else
  \providecommand{\doi}{doi: \begingroup \urlstyle{rm}\Url}\fi

\bibitem[Aly et~al.(2021)Aly, Guo, Schlichtkrull, Thorne, Vlachos, Christodoulopoulos, Cocarascu, and Mittal]{aly_feverous_2021}
Rami Aly, Zhijiang Guo, Michael~Sejr Schlichtkrull, James Thorne, Andreas Vlachos, Christos Christodoulopoulos, Oana Cocarascu, and Arpit Mittal.
\newblock {FEVEROUS}: {Fact} {Extraction} and {VERification} {Over} {Unstructured} and {Structured} information.
\newblock June 2021.
\newblock URL \url{https://openreview.net/forum?id=h-flVCIlstW}.

\bibitem[Anthropic()]{anthropic_introducing_nodate}
Anthropic.
\newblock Introducing the {Model} {Context} {Protocol}.
\newblock URL \url{https://www.anthropic.com/news/model-context-protocol}.

\bibitem[Banerjee et~al.(2024)Banerjee, Agarwal, and Singla]{banerjee_llms_2024}
Sourav Banerjee, Ayushi Agarwal, and Saloni Singla.
\newblock {LLMs} {Will} {Always} {Hallucinate}, and {We} {Need} to {Live} {With} {This}, September 2024.
\newblock URL \url{http://arxiv.org/abs/2409.05746}.
\newblock arXiv:2409.05746 [stat].

\bibitem[Chen et~al.(2020)Chen, Wang, Chen, Zhang, Wang, Li, Zhou, and Wang]{chen_tabfact_2020}
Wenhu Chen, Hongmin Wang, Jianshu Chen, Yunkai Zhang, Hong Wang, Shiyang Li, Xiyou Zhou, and William~Yang Wang.
\newblock {TabFact}: {A} {Large}-scale {Dataset} for {Table}-based {Fact} {Verification}, June 2020.
\newblock URL \url{http://arxiv.org/abs/1909.02164}.
\newblock arXiv:1909.02164 [cs].

\bibitem[Farquhar et~al.(2024)Farquhar, Kossen, Kuhn, and Gal]{farquhar_detecting_2024}
Sebastian Farquhar, Jannik Kossen, Lorenz Kuhn, and Yarin Gal.
\newblock Detecting hallucinations in large language models using semantic entropy.
\newblock \emph{Nature}, 630\penalty0 (8017):\penalty0 625--630, June 2024.
\newblock ISSN 1476-4687.
\newblock \doi{10.1038/s41586-024-07421-0}.
\newblock URL \url{https://www.nature.com/articles/s41586-024-07421-0}.
\newblock Publisher: Nature Publishing Group.

\bibitem[Geng et~al.(2023)Geng, Josifoski, Peyrard, and West]{geng_grammar-constrained_2023}
Saibo Geng, Martin Josifoski, Maxime Peyrard, and Robert West.
\newblock Grammar-{Constrained} {Decoding} for {Structured} {NLP} {Tasks} without {Finetuning}.
\newblock In Houda Bouamor, Juan Pino, and Kalika Bali (eds.), \emph{Proceedings of the 2023 {Conference} on {Empirical} {Methods} in {Natural} {Language} {Processing}}, pp.\  10932--10952, Singapore, December 2023. Association for Computational Linguistics.
\newblock \doi{10.18653/v1/2023.emnlp-main.674}.
\newblock URL \url{https://aclanthology.org/2023.emnlp-main.674/}.

\bibitem[Hakim et~al.(2025)Hakim, Painter, Ramcharran, Kara, Powell, Sobczak, Sato, Bate, and Beam]{hakim_need_2025}
Joe~B. Hakim, Jeffery~L. Painter, Darmendra Ramcharran, Vijay Kara, Greg Powell, Paulina Sobczak, Chiho Sato, Andrew Bate, and Andrew Beam.
\newblock The need for guardrails with large language models in pharmacovigilance and other medical safety critical settings.
\newblock \emph{Scientific Reports}, 15\penalty0 (1):\penalty0 27886, July 2025.
\newblock ISSN 2045-2322.
\newblock \doi{10.1038/s41598-025-09138-0}.
\newblock URL \url{https://www.nature.com/articles/s41598-025-09138-0}.
\newblock Publisher: Nature Publishing Group.

\bibitem[Hennigen et~al.(2024)Hennigen, Shen, Nrusimha, Gapp, Sontag, and Kim]{hennigen_towards_2024}
Lucas~Torroba Hennigen, Zejiang Shen, Aniruddha Nrusimha, Bernhard Gapp, David Sontag, and Yoon Kim.
\newblock Towards {Verifiable} {Text} {Generation} with {Symbolic} {References}.
\newblock August 2024.
\newblock URL \url{https://openreview.net/forum?id=fib9qidCpY}.

\bibitem[Ji et~al.(2023)Ji, Lee, Frieske, Yu, Su, Xu, Ishii, Bang, Madotto, and Fung]{ji_survey_2023}
Ziwei Ji, Nayeon Lee, Rita Frieske, Tiezheng Yu, Dan Su, Yan Xu, Etsuko Ishii, Ye~Jin Bang, Andrea Madotto, and Pascale Fung.
\newblock Survey of {Hallucination} in {Natural} {Language} {Generation}.
\newblock \emph{ACM Comput. Surv.}, 55\penalty0 (12):\penalty0 248:1--248:38, March 2023.
\newblock ISSN 0360-0300.
\newblock \doi{10.1145/3571730}.
\newblock URL \url{https://dl.acm.org/doi/10.1145/3571730}.

\bibitem[Kadavath et~al.(2022)Kadavath, Conerly, Askell, Henighan, Drain, Perez, Schiefer, Hatfield-Dodds, DasSarma, Tran-Johnson, Johnston, El-Showk, Jones, Elhage, Hume, Chen, Bai, Bowman, Fort, Ganguli, Hernandez, Jacobson, Kernion, Kravec, Lovitt, Ndousse, Olsson, Ringer, Amodei, Brown, Clark, Joseph, Mann, McCandlish, Olah, and Kaplan]{kadavath_language_2022}
Saurav Kadavath, Tom Conerly, Amanda Askell, Tom Henighan, Dawn Drain, Ethan Perez, Nicholas Schiefer, Zac Hatfield-Dodds, Nova DasSarma, Eli Tran-Johnson, Scott Johnston, Sheer El-Showk, Andy Jones, Nelson Elhage, Tristan Hume, Anna Chen, Yuntao Bai, Sam Bowman, Stanislav Fort, Deep Ganguli, Danny Hernandez, Josh Jacobson, Jackson Kernion, Shauna Kravec, Liane Lovitt, Kamal Ndousse, Catherine Olsson, Sam Ringer, Dario Amodei, Tom Brown, Jack Clark, Nicholas Joseph, Ben Mann, Sam McCandlish, Chris Olah, and Jared Kaplan.
\newblock Language {Models} ({Mostly}) {Know} {What} {They} {Know}, November 2022.
\newblock URL \url{http://arxiv.org/abs/2207.05221}.
\newblock arXiv:2207.05221 [cs].

\bibitem[Kalai et~al.(2025)Kalai, Nachum, Vempala, and Zhang]{kalai_why_2025}
Adam~Tauman Kalai, Ofir Nachum, Santosh~S. Vempala, and Edwin Zhang.
\newblock Why language models hallucinate, September 2025.
\newblock URL \url{https://cdn.openai.com/pdf/d04913be-3f6f-4d2b-b283-ff432ef4aaa5/why-language-models-hallucinate.pdf}.

\bibitem[Kang \& Liu(2023)Kang and Liu]{kang_deficiency_2023}
Haoqiang Kang and Xiao-Yang Liu.
\newblock Deficiency of {Large} {Language} {Models} in {Finance}: {An} {Empirical} {Examination} of {Hallucination}, November 2023.
\newblock URL \url{http://arxiv.org/abs/2311.15548}.
\newblock arXiv:2311.15548 [cs].

\bibitem[Kim et~al.(2025)Kim, Jeong, Chen, Li, Lu, Alhamoud, Mun, Grau, Jung, Gameiro, Fan, Park, Lin, Yoon, Yoon, Sap, Tsvetkov, Liang, Xu, Liu, McDuff, Lee, Park, Tulebaev, and Breazeal]{kim_medical_2025}
Yubin Kim, Hyewon Jeong, Shan Chen, Shuyue~Stella Li, Mingyu Lu, Kumail Alhamoud, Jimin Mun, Cristina Grau, Minseok Jung, Rodrigo Gameiro, Lizhou Fan, Eugene Park, Tristan Lin, Joonsik Yoon, Wonjin Yoon, Maarten Sap, Yulia Tsvetkov, Paul Liang, Xuhai Xu, Xin Liu, Daniel McDuff, Hyeonhoon Lee, Hae~Won Park, Samir Tulebaev, and Cynthia Breazeal.
\newblock Medical {Hallucination} in {Foundation} {Models} and {Their} {Impact} on {Healthcare}, March 2025.
\newblock URL \url{https://www.medrxiv.org/content/10.1101/2025.02.28.25323115v1}.
\newblock Pages: 2025.02.28.25323115.

\bibitem[Lewis et~al.(2020)Lewis, Perez, Piktus, Petroni, Karpukhin, Goyal, Küttler, Lewis, Yih, Rocktäschel, Riedel, and Kiela]{lewis_retrieval-augmented_2020}
Patrick Lewis, Ethan Perez, Aleksandra Piktus, Fabio Petroni, Vladimir Karpukhin, Naman Goyal, Heinrich Küttler, Mike Lewis, Wen-tau Yih, Tim Rocktäschel, Sebastian Riedel, and Douwe Kiela.
\newblock Retrieval-augmented generation for knowledge-intensive {NLP} tasks.
\newblock In \emph{Proceedings of the 34th {International} {Conference} on {Neural} {Information} {Processing} {Systems}}, {NIPS} '20, pp.\  9459--9474, Red Hook, NY, USA, December 2020. Curran Associates Inc.
\newblock ISBN 978-1-71382-954-6.

\bibitem[Li et~al.(2024)Li, Yue, Liao, and Sun]{li_attributionbench_2024}
Yifei Li, Xiang Yue, Zeyi Liao, and Huan Sun.
\newblock {AttributionBench}: {How} {Hard} is {Automatic} {Attribution} {Evaluation}?
\newblock In Lun-Wei Ku, Andre Martins, and Vivek Srikumar (eds.), \emph{Findings of the {Association} for {Computational} {Linguistics}: {ACL} 2024}, pp.\  14919--14935, Bangkok, Thailand, August 2024. Association for Computational Linguistics.
\newblock \doi{10.18653/v1/2024.findings-acl.886}.
\newblock URL \url{https://aclanthology.org/2024.findings-acl.886/}.

\bibitem[Manakul et~al.(2023)Manakul, Liusie, and Gales]{manakul_selfcheckgpt_2023}
Potsawee Manakul, Adian Liusie, and Mark Gales.
\newblock {SelfCheckGPT}: {Zero}-{Resource} {Black}-{Box} {Hallucination} {Detection} for {Generative} {Large} {Language} {Models}.
\newblock In Houda Bouamor, Juan Pino, and Kalika Bali (eds.), \emph{Proceedings of the 2023 {Conference} on {Empirical} {Methods} in {Natural} {Language} {Processing}}, pp.\  9004--9017, Singapore, December 2023. Association for Computational Linguistics.
\newblock \doi{10.18653/v1/2023.emnlp-main.557}.
\newblock URL \url{https://aclanthology.org/2023.emnlp-main.557/}.

\bibitem[Necula(1997)]{necula_proof-carrying_1997}
George~C. Necula.
\newblock Proof-carrying code.
\newblock In \emph{Proceedings of the 24th {ACM} {SIGPLAN}-{SIGACT} symposium on {Principles} of programming languages}, {ACM} {Conferences}, pp.\  106--119. January 1997.
\newblock ISBN 978-0-89791-853-4.
\newblock \doi{10.1145/263699.263712}.
\newblock URL \url{https://dl.acm.org/doi/10.1145/263699.263712}.

\bibitem[Schreieder et~al.(2025)Schreieder, Schopf, and Färber]{schreieder_attribution_2025}
Tobias Schreieder, Tim Schopf, and Michael Färber.
\newblock Attribution, {Citation}, and {Quotation}: {A} {Survey} of {Evidence}-based {Text} {Generation} with {Large} {Language} {Models}, August 2025.
\newblock URL \url{http://arxiv.org/abs/2508.15396}.
\newblock arXiv:2508.15396 [cs] version: 1.

\bibitem[{The World Bank}(2025)]{the_world_bank_world_2025}
{The World Bank}.
\newblock World {Bank} {Open} {Data} - {World} {Development} {Indicators} ({WDI}), 2025.
\newblock URL \url{https://data.worldbank.org}.

\bibitem[Thorne et~al.(2018)Thorne, Vlachos, Christodoulopoulos, and Mittal]{thorne_fever_2018}
James Thorne, Andreas Vlachos, Christos Christodoulopoulos, and Arpit Mittal.
\newblock {FEVER}: a {Large}-scale {Dataset} for {Fact} {Extraction} and {VERification}.
\newblock In Marilyn Walker, Heng Ji, and Amanda Stent (eds.), \emph{Proceedings of the 2018 {Conference} of the {North} {American} {Chapter} of the {Association} for {Computational} {Linguistics}: {Human} {Language} {Technologies}, {Volume} 1 ({Long} {Papers})}, pp.\  809--819, New Orleans, Louisiana, June 2018. Association for Computational Linguistics.
\newblock \doi{10.18653/v1/N18-1074}.
\newblock URL \url{https://aclanthology.org/N18-1074/}.

\bibitem[W3C()]{noauthor_verifiable_nodate}
W3C.
\newblock Verifiable {Credential} {Data} {Integrity} 1.0.
\newblock URL \url{https://www.w3.org/TR/vc-data-integrity/}.

\bibitem[Wadden et~al.(2022)Wadden, Lo, Kuehl, Cohan, Beltagy, Wang, and Hajishirzi]{wadden_scifact-open_2022}
David Wadden, Kyle Lo, Bailey Kuehl, Arman Cohan, Iz~Beltagy, Lucy~Lu Wang, and Hannaneh Hajishirzi.
\newblock {SciFact}-{Open}: {Towards} open-domain scientific claim verification.
\newblock In Yoav Goldberg, Zornitsa Kozareva, and Yue Zhang (eds.), \emph{Findings of the {Association} for {Computational} {Linguistics}: {EMNLP} 2022}, pp.\  4719--4734, Abu Dhabi, United Arab Emirates, December 2022. Association for Computational Linguistics.
\newblock \doi{10.18653/v1/2022.findings-emnlp.347}.
\newblock URL \url{https://aclanthology.org/2022.findings-emnlp.347/}.

\bibitem[Wu et~al.(2025{\natexlab{a}})Wu, Wu, Wei, Zhang, Casasola, Nguyen, Riantawan, Shi, Ho, and Zou]{wu_automated_2025}
Kevin Wu, Eric Wu, Kevin Wei, Angela Zhang, Allison Casasola, Teresa Nguyen, Sith Riantawan, Patricia Shi, Daniel Ho, and James Zou.
\newblock An automated framework for assessing how well {LLMs} cite relevant medical references.
\newblock \emph{Nature Communications}, 16\penalty0 (1):\penalty0 3615, April 2025{\natexlab{a}}.
\newblock ISSN 2041-1723.
\newblock \doi{10.1038/s41467-025-58551-6}.
\newblock URL \url{https://www.nature.com/articles/s41467-025-58551-6}.
\newblock Publisher: Nature Publishing Group.

\bibitem[Wu et~al.(2025{\natexlab{b}})Wu, Wu, and Zou]{wu_clasheval_2025}
Kevin Wu, Eric Wu, and James Zou.
\newblock {ClashEval}: {Quantifying} the tug-of-war between an {LLM}'s internal prior and external evidence, February 2025{\natexlab{b}}.
\newblock URL \url{http://arxiv.org/abs/2404.10198}.
\newblock arXiv:2404.10198 [cs].

\bibitem[Xu et~al.(2025)Xu, Jain, and Kankanhalli]{xu_hallucination_2025}
Ziwei Xu, Sanjay Jain, and Mohan Kankanhalli.
\newblock Hallucination is {Inevitable}: {An} {Innate} {Limitation} of {Large} {Language} {Models}, February 2025.
\newblock URL \url{http://arxiv.org/abs/2401.11817}.
\newblock arXiv:2401.11817 [cs].

\bibitem[Zhang et~al.(2025)Zhang, Bai, Lv, Gu, Liu, Zou, Cao, Hou, Dong, Feng, and Li]{zhang_longcite_2025}
Jiajie Zhang, Yushi Bai, Xin Lv, Wanjun Gu, Danqing Liu, Minhao Zou, Shulin Cao, Lei Hou, Yuxiao Dong, Ling Feng, and Juanzi Li.
\newblock {LongCite}: {Enabling} {LLMs} to {Generate} {Fine}-grained {Citations} in {Long}-{Context} {QA}.
\newblock In Wanxiang Che, Joyce Nabende, Ekaterina Shutova, and Mohammad~Taher Pilehvar (eds.), \emph{Findings of the {Association} for {Computational} {Linguistics}: {ACL} 2025}, pp.\  5098--5122, Vienna, Austria, July 2025. Association for Computational Linguistics.
\newblock ISBN 9798891762565.
\newblock \doi{10.18653/v1/2025.findings-acl.264}.
\newblock URL \url{https://aclanthology.org/2025.findings-acl.264/}.

\end{thebibliography}
\bibliographystyle{iclr2026_conference}

\subsubsection*{Disclaimer and Disclosure of AI Use}

The findings, interpretations, and conclusions expressed in this paper are entirely those of the authors. They do not necessarily represent the views of the International Bank for Reconstruction and Development/World Bank and its affiliated organizations, or those of the Executive Directors of the World Bank or the governments they represent.

Microsoft Co-Pilot and ChatGPT were employed to enhance the manuscript's readability and consistency. ChatGPT was also used to critique the previous versions of the manuscript for suggested improvements.

\newpage
\appendix
\section{Appendix}

\subsection{Sample implementation}

Suppose the user asks a question to the LLM: ``What is the gdp growth of the Philippines in 2024?"

The following shows how the Proof-Carrying Numbers (PCN) protocol works to provide an answer to the user.

First, the LLM queries the data retriever, in this case, an MCP server that gets data from the World Development Indicators. The MCP server implements the PCN protocol and returns this PCN payload with the claim to the LLM generator.

\begin{tcolorbox}[title=PCN compliant claim from the retriever, breakable]
\begin{minted}[
  breaklines,
  breakanywhere,
  breakautoindent,
  breakindent=1.25em,
  autogobble,
  tabsize=2,
  fontsize=\footnotesize,
  numbersep=6pt
]{JSON}
{
  "data": [
    {
      "indicator_id": "NY.GDP.MKTP.KD.ZG",
      "indicator_name": "GDP growth (annual %)",
      "data": [
        {
          "country": "Philippines",
          "date": "2024",
          "value": 5.69201612823412,
          "claim_id": "0328"
        }
      ]
    }
  ],
  "note": {
    "NY.GDP.MKTP.KD.ZG": ""
  }
}
\end{minted}
\end{tcolorbox}

The LLM, given the payload, will generate a response that attempts to answer the user's question using the retriever's data. At the same time the LLM is instructed to conform to the PCN protocol in generating the response.

The instruction given to the LLM and its PCN-compliant response are shown below.

\begin{tcolorbox}[title=Example prompt to encourage PCN use by the LLM, breakable]
\begin{minted}[
  breaklines,
  breakanywhere,
  breakautoindent,
  breakindent=1.25em,
  autogobble,
  tabsize=2,
  fontsize=\footnotesize,
  numbersep=6pt
]{text}

    [... other instructions]
    
    When you provide any numerical data or values obtained from the tools, you must always enclose the numbers within a claim tag in the following format: `<claim id="claim_id">"value"</claim>`. For example, "The GDP of the Philippines in 2020 is <claim id="5e1f">361,751,145,451.597</claim> USD". This is mandatory.

\end{minted}
\end{tcolorbox}

\begin{tcolorbox}[title=PCN-compliant LLM response, breakable]
\begin{minted}[
  breaklines,
  breakanywhere,
  breakautoindent,
  breakindent=1.25em,
  autogobble,
  tabsize=2,
  fontsize=\footnotesize,
  numbersep=6pt
]{html}

    <p>The GDP growth rate of the Philippines in 2024 is projected to be <claim id="0328">5.69%</claim> (annual %).</p>

\end{minted}
\end{tcolorbox}

In parallel, when the receiver returns the PCN claim requested by the LLM, a claim aggregation process is applied to capture the claim, which can be used during verification.

The TypeScript function below shows how the PCN payload from the retriever is aggregated to create a map of claims that the verifier can use to verify the claims in the response generated by the LLM.

\begin{tcolorbox}[title=Extraction of claims from the retriever component, breakable]
\begin{minted}[
  breaklines,
  breakanywhere,
  breakautoindent,
  breakindent=1.25em,
  autogobble,
  tabsize=2,
  fontsize=\footnotesize,
  numbersep=6pt
]{typescript}
export function getClaims (messages: ResponseMessage[]): Record<string, Record<string, any>> {
  const claims: Record<string, Record<string, any>> = {}

  for (const m of messages
    .filter(m => m.role === 'tool')) {
    const parsed = m.toolContent?.output?.parsed
    if (!parsed?.data) {
      continue
    }

    for (const indicator of parsed.data) {
      for (const d of indicator.data) {
        if (d.claim_id) {
          claims[d.claim_id] = {
            country: d.country,
            date: d.date,
            value: d.value,
            indicator_id: indicator.indicator_id,
          }
        }
      }
    }
  }

  return claims
}
\end{minted}
\end{tcolorbox}

Now that the LLM has generated a response, we execute the PCN-verifier module to assess if claims have been made and then validate if any.

The TypeScript function below shows how the PCN protocol can be implemented to process the response generated by an LLM. This example implements the exact policy variant.

\begin{tcolorbox}[title=Processing of PCN claims in LLM content, breakable]
\begin{minted}[
  breaklines,
  breakanywhere,
  breakautoindent,
  breakindent=1.25em,
  autogobble,
  tabsize=2,
  fontsize=\footnotesize,
  numbersep=6pt
]{typescript}
  const processPCNClaims = (content: string) => {
    return content.replace(
      /<claim id="([^"]+)">(.*?)<\/claim>/g,
      (match, claimId: string, innerText: string) => {
        // Remove any existing verification markers to make this idempotent
        const cleanInner = innerText.replace(
          /<sup class="(?:verified-mark|verify-pending)".*?<\/sup>/g,
          '',
        ).replace(
          /<span class="needs-verify".*?>(.*?)<\/span>/g,
          '$1',
        )

        const claim = claims.value?.[claimId]
        if (!claim) {
          // No known claim for this id → mark as pending
          return `<claim id="${claimId}">${cleanInner}<sup class="verify-pending" title="Needs verification" role="img" aria-label="Needs verification">X</sup></claim>`
        }

        // Normalize inner text & claim value (remove spaces/commas)
        const normalizedInner = String(cleanInner).replace(/[\s,]/g, '')
        const normalizedClaimValue = String(claim.value).replace(/[\s,]/g, '')

        if (normalizedInner === normalizedClaimValue) {
          // Verified
          return `<claim id="${claimId}">${cleanInner}<sup class="verified-mark" title="Verified data">OK</sup></claim>`
        }

        // Mismatch → needs verification
        return `<claim id="${claimId}">${cleanInner}<sup class="verify-pending" title="${toTitleAttr(claim.country, claim.date, String(claim.value))}" role="img" aria-label="Needs verification">X</sup></claim>`
      },
    )
  }
\end{minted}
\end{tcolorbox}

The verifier modifies the content by injecting verification signals into the response. In this case, since the policy is exact, and the LLM opted to return a rounded-off version of the actual value received from the receiver, then the verifier returns a warning signal indicating the policy constraint was not met.

\begin{tcolorbox}[title=Updated response by the verifier, breakable]
\begin{minted}[
  breaklines,
  breakanywhere,
  breakautoindent,
  breakindent=1.25em,
  autogobble,
  tabsize=2,
  fontsize=\footnotesize,
  numbersep=6pt
]{html}

    <p>The GDP growth rate of the Philippines in 2024 is projected to be <claim id="0328">5.69%<sup class="verify-pending" title="Country: Philippines
Date: 2024
Value: 5.69201612823412" role="img" aria-label="Needs verification">X</sup></claim> (annual %).</p>

\end{minted}
\end{tcolorbox}




\begin{figure}[t]\label{fig:sample-pcn-warn}
\begin{center}
\includegraphics[width=\linewidth]{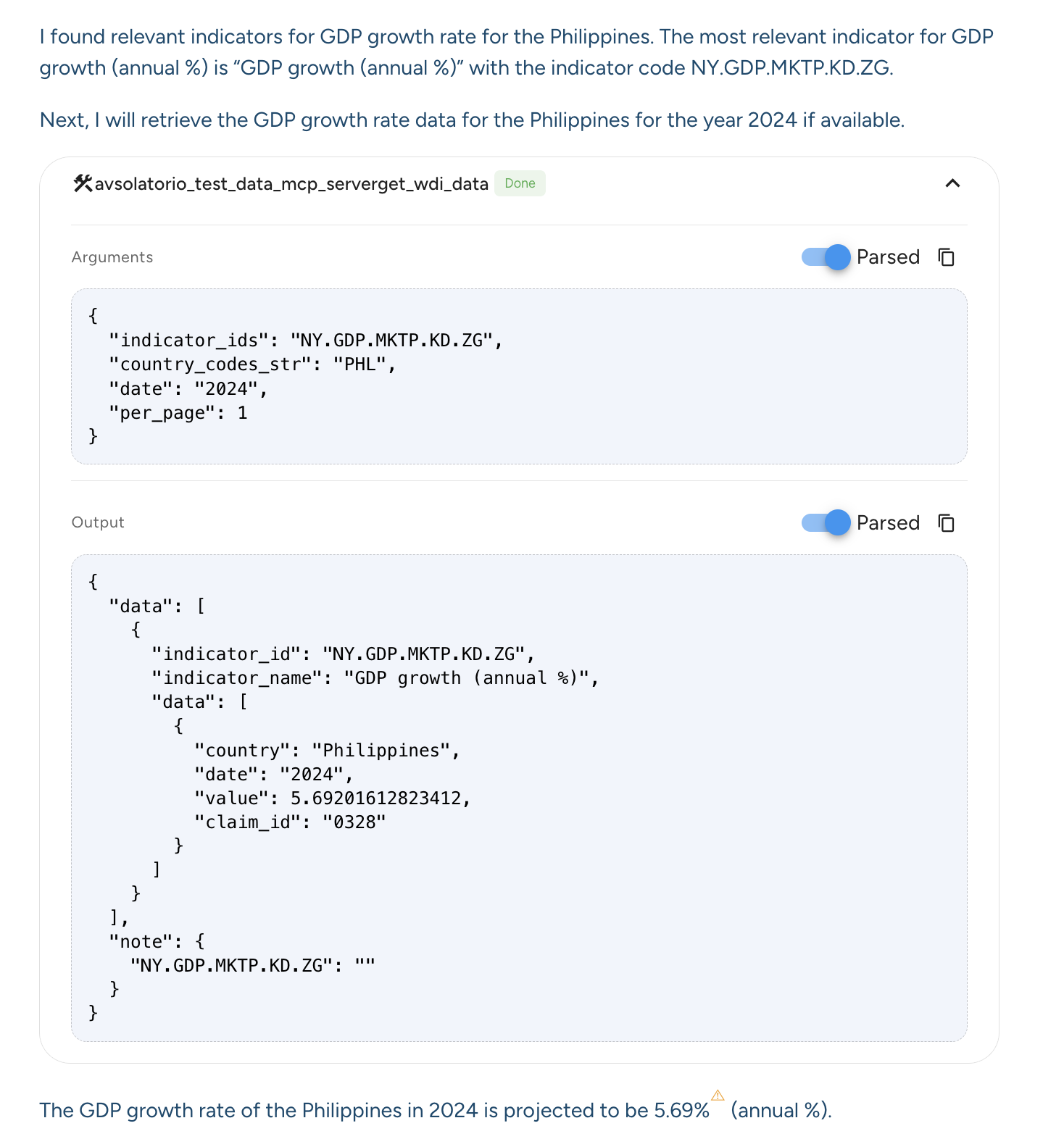}

\end{center}
\caption{Illustration of the Proof-Carrying Numbers (PCN) protocol in an LLM chat application, where the response of the LLM (5.69\%) didn't meet the exact policy set in the verification. A clear warning mark is added that hints to users to review the output.}
\end{figure}

\end{document}